\newtheorem{mydef}{Definition}
\newtheorem{mytheo}{Theorem}
\begin{document}
\title{From line segments to more organized Gestalts}

\author{Boshra Rajaei$^{1,2}$, Rafael Grompone von Gioi $^{1}$,
      Jean-Michel Morel$^{1}$}
\noaffiliation
\affiliation{
CMLA, ENS Cachan, France\\
$^2$ Sadjad University of Technology, Mashhad, Iran
}

\begin{abstract} 
In this paper, we reconsider the early computer vision bottom-up program,
according to which higher level features (geometric structures) in an image
could be built up recursively from elementary features by simple grouping
principles coming from Gestalt theory. Taking advantage of the (recent) advances
in reliable line segment detectors, we propose three feature detectors that
constitute one step up in this bottom up pyramid. For any digital image, our
unsupervised algorithm computes three classic Gestalts from the set of
predetected line segments: good continuations, nonlocal alignments, and
bars. The methodology is based on a common stochastic {\it a contrario model}
yielding three simple detection formulas, characterized by their number of false
alarms. This detection algorithm is illustrated on several digital images.
\end{abstract}
\maketitle

\section{Introduction}\label{sec:Introdution}  

A basic task of the visual system is to group fragments of a scene into objects
and to separate one object from others and from background. Perceptual grouping
has inspired many researches in psychology in the past century.  In 1920's, a
systematic approach for investigating human perception was introduced under the
name of Gestalt theory \cite{wertheimer}. The Gestalt school proposed the
existence of a short list of grouping laws governing visual perception. The
Gestalt laws include, but are not limited to, similarity, proximity,
connectedness and good continuation. Gestaltists argue that partial grouping laws
recursively group image elements to form more organized Gestalts.  But the
grouping laws were qualitative and lacked a quantitative formalization.

Since its origins, computer vision has been interested in Gestalt laws and there
have been several attempts to formalize aspects of the Gestalt program
\cite{Lowe85,sarkar1993}. We will concentrate on a particular approach
\cite{DMM2000,dmm08} which has led to the conception of several Gestalt
detectors to organize meaningful geometric structures in a digital image. Based
on the \emph{non-accidentalness principle} \cite{Lowe85,rock-logic}, an observed
structure is considered meaningful when the relation between its parts is too
regular to be the result of an accidental arrangement of independent parts. This
is the rationale behind the \textit{a contrario} model for determining potential
Gestalts which is formulated in the next section.

The \textit{a contrario} methodology was used in \cite{lezama_align}, inspired
by the \textit{good continuation} Gestalt principle, to propose an algorithm to
detect alignments of points in a point pattern. The detected point alignments
are further employed toward a nonparametric vanishing point estimating algorithm
again based on the non-accidentalness principle \cite{lezama_vanishing}. Based
on the same principle, an automatic line segment detector (LSD) is provided in
\cite{lsd_pami,lsd_ipol} with linear time complexity. The same approach is used
in the EDLines straight line calculator \cite{akinlar2011edlines} but with the
difference that the latter algorithm starts from an edge drawing output while
the former is based on image level lines. Similar ideas were also proposed to
detect circles and ellipses \cite{elsd,akinlar2013edcircles}. Level lines are
employed to detect good continuations and image corners in \cite{cao}. Using
edge pieces (edgelets) the authors of \cite{widynski2011edgelet} adopt a
featured \textit{a contrario} scheme to extract the strong edges and eliminate
irrelevant and textural ones. Extracting an object of interest from textural
background or from outlier points is studied in
\cite{grosjean2009texture,gerogiannis2015elimination} where, for instance, fixed
contrast spots are detected in mammographical images.

The \textit{a contrario} framework was used for image segmentation in
\cite{burrus2009image}. To obtain robust segments, the authors suggest a
combination of the \textit{a contrario} approach and of Monte-Carlo
simulation. Also, to solve the hierarchical segmentation issue, the authors of
\cite{cardelino2009hierarchical} suggest two \textit{a contrario} criteria for
measuring region and merging meaningfulness based on homogeneity and boundary
contrast.

In this paper, we start from the ``partial Gestalts" constituted by all LSD
detected segments, and explore if they can be used as basic elements for more
elaborate Gestalt groups such as long straight lines, good continuations and
parallel segments. Toward this aim, the \textit{a contrario} model is adopted on
the line segment distribution. It is used repeatedly for detecting the mentioned
structures as non-accidental. We shall observe in most images that, with the
exception of a few isolated segments, all line segments are classified precisely
and hence the algorithm forms another level of segmentation pyramid toward a
complete analysis and understanding of each image's structure. The main
advantage of this approach is its low time complexity due to the exploitation of
line segments as input structures which are far less numerous than the
unstructured set of all image pixels.

\section{Theoretical modeling}\label{sec: theoretical_modeling}

The \textit{a contrario} detection approach has a probabilistic basis. Consider
an event of interest $e$. According to the non-accidentalness principle, $e$ is
meaningful if its expectation is low under the stochastic 
model $H_0$. The stochastic expectation of an event is called its number of
false alarms (NFA) and is defined as
\begin{equation}\label{equ:NFA}
\text{NFA}(e) = N_{test}P_{H_0}(e)
\end{equation}
where $N_{test}$ indicates the number of possible occurrences of $e$ and
$P_{H_0}(e)$ is the probability of $e$ happening under $H_0$. A relatively small
NFA implies a rare event $e$ under the \textit{a contrario} model and therefore,
a meaningful one. An event $e$ is called $\epsilon$-meaningful if
$\text{NFA}(e)<\epsilon$. In this paper, three types of events are under focus:
non-local alignments, good continuations and parallelism. For each of them, the
\textit{a contrario} model $H_0$ is adopted of stochastic line segments, with
independent and uniformly distributed tips on the image domain. Here, we provide
a formal definition of each event.
\begin{mydef}\label{def:good_cont}
A sequence of $k$ line segments $l_i, l_{i+1}, \dots, l_j$ form a
\textnormal{potential} good continuation $\wp^{k,d,\theta}$ if and only if
\linebreak $d = D(l_i, l_{i+1}, \dots, l_j) < \rho$ and $\theta=\angle(l_i,
l_{i+1}, \dots, l_j)<\theta_s$ for predefined constant $\rho$ and $\theta_s$
thresholds.
\end{mydef}
In the above definition $D(\cdot)$ and $\angle(\cdot)$ denote respectively the
maximum of the distances and the maximum of the angles between successive line
segments in a sequence. The two threshold values $\rho$ and $\theta_s$ limit the
search space around each line tip for finding close line segments as represented
in Figure \ref{fig:sector}. A small margin $\lambda$ is allowed to deal with
possible misalignments in LSD outputs. This special adopted sector is denoted by
$\mathcal{S}$ in the sequel.

\fboxsep=0pt
\fboxrule=0.5pt
\begin{figure} 
\centering
\fbox{\includegraphics[width=.2\textwidth]{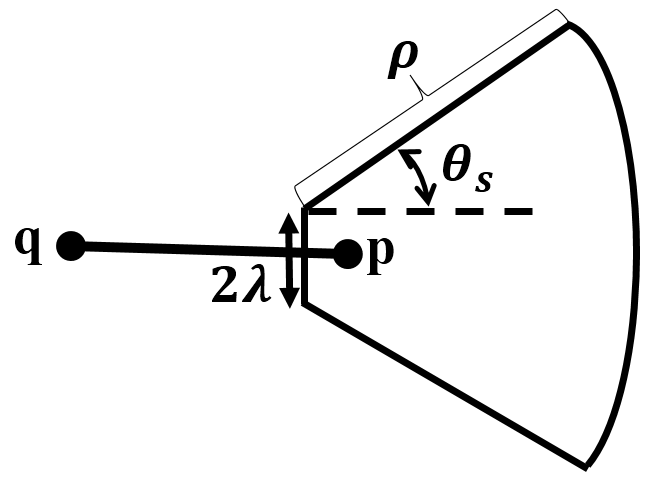}}
\caption{Given the endpoint $p$ of a line segment, this figure defines the
  domain $\mathcal S$ around $p$, in which the presence of successive segment
  endpoints is searched by the algorithm. This search space depends on three
  parameters, $\rho$, $\theta_s$ and $\lambda$.}
\label{fig:sector}
\end{figure}

\begin{mydef}\label{def:sline}
A good continuation satisfying $\theta_s<3^\circ$ is called a non-local alignment
event $\zeta^{k,d,\theta}$.
\end{mydef}
\begin{mydef}\label{def:parallel}
Two line segments or non-local alignments $l_i$ and $l_j$ are said to form a bar
$\Im^{d,\theta}$ if and only if $d=D_m(l_i,l_j)<\rho$ and
$\theta=\angle(l_i,l_j)\approx\pi$ for a predefined constant $\rho$ threshold.
\end{mydef}
Here, $D_m(\cdot)$ stands for the mutual distance operator which is calculated
as the average distance between the respective tips of both segments.

\begin{mytheo}\label{Theo:GC}
Under the \textnormal{a contrario} model that all segment tips are uniform
i.i.d. spatial variables in the image domain, the number of false alarms (NFA)
of a good continuation event $\wp^{k,d,\theta}$ according to Definition
\ref{def:good_cont} is equal to \\
\begin{equation}\label{equ:NFA_GC}
  \text{NFA}(\wp^{k,d,\theta})
    = 2N \cdot 3^{k-1} \cdot \left( (N-1) \frac{\theta d^2}{mn} \cdot
                                   \frac{\theta}{\pi} \right)^{k-1}
\end{equation}
where $N$ indicates the total number of line segments.
\end{mytheo}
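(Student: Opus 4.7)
The plan is to expand $\text{NFA}(\wp^{k,d,\theta})=N_{\text{test}}\cdot P_{H_0}(\wp^{k,d,\theta})$ following \eqref{equ:NFA}, splitting the work into a combinatorial count of configurations ($N_{\text{test}}$) and a geometric probability ($P_{H_0}$). Because a $k$-chain is built out of $k-1$ independent extensions from a starting endpoint, both factors naturally carry an exponent $k-1$.

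For the count, I would specify a tested configuration by (i) the starting segment, giving $N$ choices; (ii) which of its two endpoints initiates the chain, giving the leading factor of $2$; (iii) at each of the $k-1$ extension steps, the choice of one of the $N-1$ remaining segments, contributing the $(N-1)^{k-1}$ inside the parenthesis; and (iv) a discrete combinatorial type of attachment of each new segment against the current endpoint, which I claim contributes $3^{k-1}$. Component (iv) reflects the internal structure of the sector $\mathcal{S}$ in Figure~\ref{fig:sector}: the margin $\lambda$, together with the choice of which tip of the successor lies in $\mathcal{S}$ and on which angular side of the continuation direction it sits, partitions the attachment possibilities into a small fixed number of cases, which I would enumerate and show to equal exactly three per step.

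For the probability, the \emph{a contrario} hypothesis says that each segment tip is uniform on the image (area $mn$), each orientation is uniform on $[0,\pi)$, and all segments are independent. For a single extension, the prescribed tip of the next segment must lie in the sector of half-angle $\theta$ and radius $d$; its area is $\tfrac12 d^2(2\theta)=\theta d^2$, yielding a spatial probability of $\theta d^2/(mn)$. Independently, the new segment's orientation must agree within $\theta$ of the continuation direction, contributing $\theta/\pi$. Independence across the $k-1$ successive extensions (clean because distinct segments have i.i.d.\ tips and orientations) lets me raise the one-step probability to the power $k-1$, producing $\bigl((\theta d^2)/(mn)\cdot \theta/\pi\bigr)^{k-1}$. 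Multiplying $N_{\text{test}}$ by $P_{H_0}$ then reproduces \eqref{equ:NFA_GC}.

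The main obstacle is pinning down the $3^{k-1}$ multiplier rigorously: one has to inspect the precise geometry of $\mathcal{S}$ (governed by $\rho$, $\theta_s$, and $\lambda$) and rule out under- or over-counting, ensuring in particular that the same chain is not listed several times under reversal or endpoint relabelling. A secondary subtlety is the replacement of the exact $(N-1)(N-2)\cdots(N-k+1)$ by $(N-1)^{k-1}$, which I would justify as a harmless upper-bound relaxation valid in the low-NFA regime where the bound is actually applied.
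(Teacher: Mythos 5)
Your probability computation matches the paper's: the sector area $\tfrac12 d^2(2\theta)=\theta d^2$ gives the spatial factor $\theta d^2/(mn)$ and the orientation tolerance gives $\theta/\pi$, exactly the $\Pi_s$ and $\Pi_a$ of the paper. The relocation of $(N-1)^{k-1}$ from the probability into the test count is a harmless change of bookkeeping: the paper instead writes the per-joint probability as $1-(1-\Pi_s\Pi_a)^{N-1}$ (at least one of the other $N-1$ tips falls in $\mathcal{S}$ with a compatible orientation) and linearizes it to $(N-1)\Pi_s\Pi_a$, which yields the same product. Your closing remark about $(N-1)(N-2)\cdots$ versus $(N-1)^{k-1}$ is therefore moot in the paper's accounting, since no falling factorial ever appears there.

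The genuine gap is the $3^{k-1}$ factor. You attribute it to a geometric partition of the attachment possibilities inside $\mathcal{S}$ (the margin $\lambda$, which tip of the successor enters, which angular side it sits on) and propose to enumerate these cases and show there are exactly three. That enumeration will not come out to three: ``which tip'' and ``which side'' would naively give four cases, the angular side is already absorbed into the sector of total opening $2\theta$ used in your own area computation, and $\lambda$ plays no combinatorial role at all in this regime (it only matters for the non-local alignment variant in Theorem \ref{Theo:SL_PL}). The factor $3$ in the paper is not geometric but algorithmic: the detector, at each of the $k-1$ chain extremities, tests at most the \emph{three closest} candidate tips, so each joint contributes exactly $3$ to $N_{test}$ by construction of the search procedure. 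Since $N_{test}$ in an \emph{a contrario} bound must count the tests the algorithm actually performs, this is the only justification available, and without it your derivation of the multiplier cannot be completed. You correctly identified this as the main obstacle; the resolution is to invoke the algorithm's pruning rule rather than the geometry of $\mathcal{S}$.
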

\begin{proof}
According to (\ref{equ:NFA}), the computation of the NFA consists of two parts:
the number of tests and the probability of the geometric event under
$H_0$. Here, we have $2N$ different choices for the first tip of
$\wp^{k,d,\theta}$ (2 tips per line segment). Assuming that at each of the $k-1$ line segment
extremities we test the 3 closest tips at most, we have 3 choices per joint and, therefore, 
$N_{test}=2N3^{k-1}$.

For the probability term of $P_{H_0}(\wp^{k,d,\theta})$, let us first compute
the probability that only two line segments be in good continuation as specified
in Figure \ref{fig:sector} and with parameters $(d,\theta)$.  We shall then
extend the computation to $k$ segments. This probability
consists of two terms: $\Pi_s \simeq \frac{\theta d^2}{mn}$ is the probability
that one tip falls into $\mathcal{S}^{d,\theta,\lambda}$ around the other
tip\footnote{Since $\lambda$ is relatively small, we use this simple tight upper
  bound for $\Pi_s$.  We also neglect the fact that the angular sector may
  sometimes fall outside the image domain, with area $mn$.} and 
$\Pi_a\simeq\frac{\theta}{\pi}$ is the probability that a maximum angle of
$\theta$ occurs between the two line segments. Therefore, having one endpoint
in this area is the complement of the event where none of the $(N-1)$ other
endpoints is there. Thus $P_{H_0}(\wp^{2,d,\theta})=1-(1-\Pi_s
\Pi_a)^{N-1}$. Consequently, for $k-1$ junctions in a sequence of $k$ segments
we have $P_{H_0}(\wp^{k,d,\theta})=(1-(1-\Pi_s \Pi_a)^{N-1})^{k-1}\simeq
((N-1)\Pi_s\Pi_a)^{k-1}$.
\end{proof}
\begin{mytheo}\label{Theo:SL_PL}
Under the same \textit{a contrario} assumption as in Theorem \ref{Theo:GC}, the
number of false alarms of a non-local alignment event, $\zeta^{k,d,\theta}$, according
to Definition \ref{def:sline} is equal to \\
\begin{equation}\label{equ:NFA_SL}
  \text{NFA}(\zeta^{k,d,\theta})
    = 2N \cdot 3^{k-1} \left( (N-1) \frac{2\lambda d}{mn}
                              \cdot \frac{\theta}{\pi} \right)^{k-1},
\end{equation}
and the number of false alarms of parallel line (bar) event, $\Im^{d,\theta}$,
according to Definition \ref{def:parallel} is equal to \\
\begin{equation}\label{equ:NFA_PL}
  \text{NFA}(\Im^{d,\theta})
    = 3N \cdot (N-1) \left( \frac{\pi d^2}{mn} \right)^2
      \cdot\frac{\theta}{\pi}.
\end{equation}
\end{mytheo}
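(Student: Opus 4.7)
The plan is to follow the template of Theorem \ref{Theo:GC}'s proof for both formulas: decompose NFA via (\ref{equ:NFA}) into a number-of-tests factor $N_{test}$ times the probability of the event under $H_0$, identifying which pieces are inherited from Theorem \ref{Theo:GC} and which need to be recomputed.

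For the non-local alignment formula (\ref{equ:NFA_SL}), my starting observation is that, by Definition \ref{def:sline}, a non-local alignment is simply a good continuation with the extra constraint $\theta_s<3^\circ$. Hence the combinatorial factor $N_{test}=2N\cdot 3^{k-1}$ and the per-junction angular factor $\Pi_a\simeq\theta/\pi$ carry over verbatim. The only quantity that changes is the area of the search region $\mathcal{S}$ around a tip. Once $\theta$ is forced below $3^\circ$, the ``wedge'' part of $\mathcal{S}$ (of area of order $\theta d^2$) becomes dominated by the constant-width strip of half-width $\lambda$ and length $d$; thus $\Pi_s\simeq 2\lambda d/(mn)$ replaces $\theta d^2/(mn)$. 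Substituting into the same Bernoulli-to-linear approximation $(1-(1-\Pi_s\Pi_a)^{N-1})^{k-1}\simeq((N-1)\Pi_s\Pi_a)^{k-1}$ used in Theorem \ref{Theo:GC} yields (\ref{equ:NFA_SL}) directly.

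For the bar formula (\ref{equ:NFA_PL}), Definition \ref{def:parallel} involves only two objects $l_i,l_j$, so there is no recursion in $k$. Two independent geometric conditions must hold under $H_0$: (i) the mean tip-to-tip distance $D_m(l_i,l_j)$ is below $d$, which I would further decompose into two independent ``tip falls in a disk of radius $d$ around a given point'' events, each contributing $\Pi_s=\pi d^2/(mn)$; and (ii) the angle between $l_i$ and $l_j$ lies within a narrow band around $\pi$, contributing $\Pi_a=\theta/\pi$. The number of tests is then obtained by picking the first segment in $N$ ways and the second in $N-1$ ways, with the remaining factor of $3$ accounting for the multiplicity of nearby candidate tips examined at each endpoint, following the same ``three closest tips'' convention invoked in Theorem \ref{Theo:GC}. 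Multiplying these pieces gives (\ref{equ:NFA_PL}).

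The hard part is really the geometric bookkeeping rather than any probabilistic subtlety. For (\ref{equ:NFA_SL}) one must check that $2\lambda d/(mn)$ is indeed a clean tight upper bound for the true area of $\mathcal{S}$ in the small-$\theta$ regime, in the same spirit as the footnote accompanying Theorem \ref{Theo:GC}. For (\ref{equ:NFA_PL}) the delicate point is justifying the prefactor $3N(N-1)$ without double-counting the two possible endpoint-to-endpoint pairings of a parallel couple; this requires fixing once and for all the orientation convention used to identify matching endpoints of $l_i$ and $l_j$, and treating the two disk events in the definition of $D_m$ as independent. Once these conventions are made explicit, the rest is a direct substitution of $N_{test}\cdot P_{H_0}$ into (\ref{equ:NFA}).
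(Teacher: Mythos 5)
Your proposal is correct and follows essentially the same route as the paper: for non-local alignments you replace the sector area $\theta d^2$ by the $2\lambda\times d$ rectangle in the small-angle regime and reuse the rest of Theorem \ref{Theo:GC}'s derivation verbatim, and for bars you use the full disk (the sector with $\theta=\pi$) squared times the angular term $\theta/\pi$. The only cosmetic difference is in the bar case, where the paper keeps $N_{test}=3N$ and obtains the factor $N-1$ from the probability term via the same complement argument as in Theorem \ref{Theo:GC}, whereas you fold $N-1$ into the number of tests; the resulting formula is identical.
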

\begin{proof}
The proof of the above theorem is similar to that of Theorem
\ref{Theo:GC}. Since non-local alignments are special types of good continuation,
their NFA calculation is straightforward. The only point is that for small
$\theta<3^\circ$, $\lambda$ is not negligible anymore. Indeed, for small angles
$\mathcal{S}^{d,\theta,\lambda}$ turns into a rectangle with dimensions
$2\lambda$ by $d$. Accordingly, by replacing $\Pi_s=\frac{2\lambda d}{mn}$ in the derivation
of Theorem \ref{Theo:GC}, we obtain (\ref{equ:NFA_SL}).

Using similar reasoning for the case of parallel lines, since we only have two
segments or straight lines, the $N_{test}$ term reduces to $3N$. By substituting
$\theta=\pi$ according to Definition \ref{def:parallel}, the overall NFA is
formulated as (\ref{equ:NFA_PL}).
\end{proof}

\section{Gestalt detector}\label{sec: GD}

As mentioned earlier, by employing line segments as initial groups it is
possible to detect more organized and sophisticated Gestalts. In this paper, we
exploit the LSD algorithm \cite{lsd_pami} to produce $l_1, l_2, \dots, l_N$
initial line segments. The next step is finding all possible instances of our
three Gestalts of interest in the form of sequences of line segments or
\textit{chains}\footnote{A time efficient algorithm to prune the detection space
  can be tested in the online facility http://gestalt-detector.webs.com .}.
\begin{mydef}
A good continuation (Definition \ref{def:good_cont}) chain $\wp^{k,d,\theta}$ is
meaningful if and only if $\text{NFA}(\wp^{k,d,\theta})<\epsilon$ and there
exists no subchain of $\wp$ with smaller NFA.
\end{mydef}
Meaningful non-local alignments and bars follow the same definition. We assume
$\epsilon=1$ as a simple way to allow less than one false alarm on average for each
Gestalt type per image \cite{lsd_ipol}. Finally, by merging all meaningful
Gestalts we obtain a drawing of the input image. The next section addresses the
efficiency of the proposed approach applied on real-world images.

\section{Experimental Results} \label{sec: simulations} 

Man-made structures like buildings, furniture and natural essences present many
good continuations in the form of line segments and curves or basic geometrical
shapes like rectangles and parallelograms. To organize these structures using
the proposed algorithm in this paper, we first applied LSD algorithm
\cite{lsd_ipol}. Afterwards, the two initial thresholds $\theta_s$ and $\rho$ in
Definitions \ref{def:good_cont} to \ref{def:parallel} must be determined. These
two parameters limit the number of chains that later are considered by the
\textit{a contrario} model. The parameter $\theta_s$ controls the smoothness of
output Gestalts while $\rho$ is a parameter proportional to the image size
restricting the maximum acceptable distance between line segments of a candidate
chain. In the sequel and to detect all kinds of parallelogram shapes, $\theta_s$
is set to $150^\circ$ and $\rho$ is experimentally fixed at $\min(10, \lceil
0.1\times \max(m,n) \rceil)$ pixels by default.

Figure \ref{fig:Detailed} shows the result of the joint application of all
feature detectors on an image. In each output image, organized Gestalts are
depicted with different colors and finally the residual line segments are shown
in the last image. Note that the detected Gestalts might be grouped again into
longer continuations using another level of \textit{a contrario} model. What is
remained from the image mostly only consists of isolated line segments that are
not expected to belong to any organized structure. More results on more
real-world or synthetic images are provided in Figure \ref{fig:Several}.

\begin{figure*}
\centering
\fbox{\includegraphics[width=0.26\textwidth]{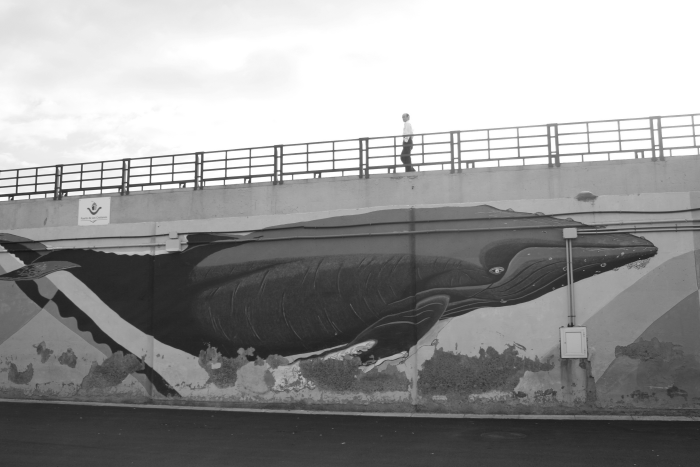}}~
\fbox{\includegraphics[width=0.26\textwidth]{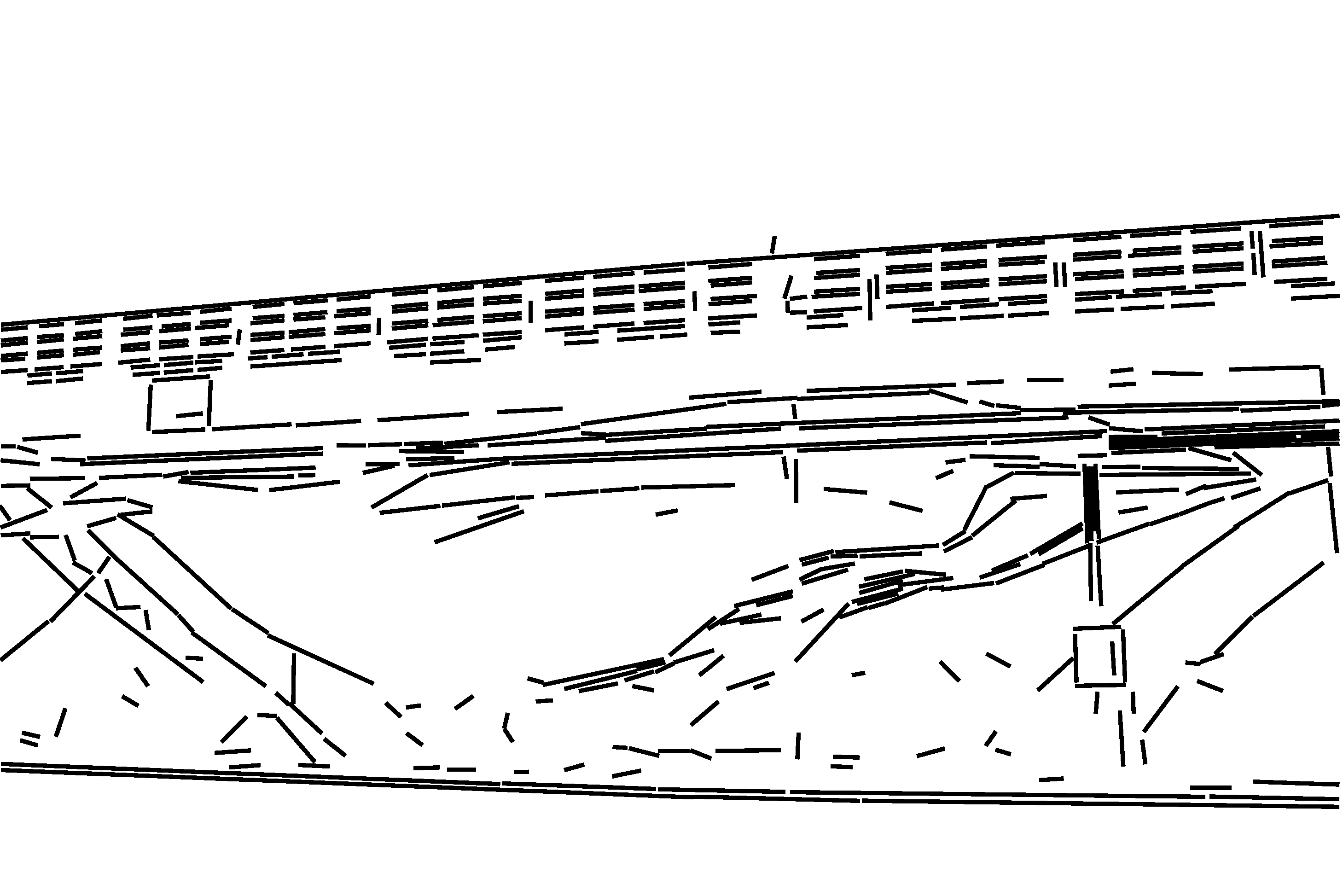}}~
\fbox{\includegraphics[width=0.26\textwidth]{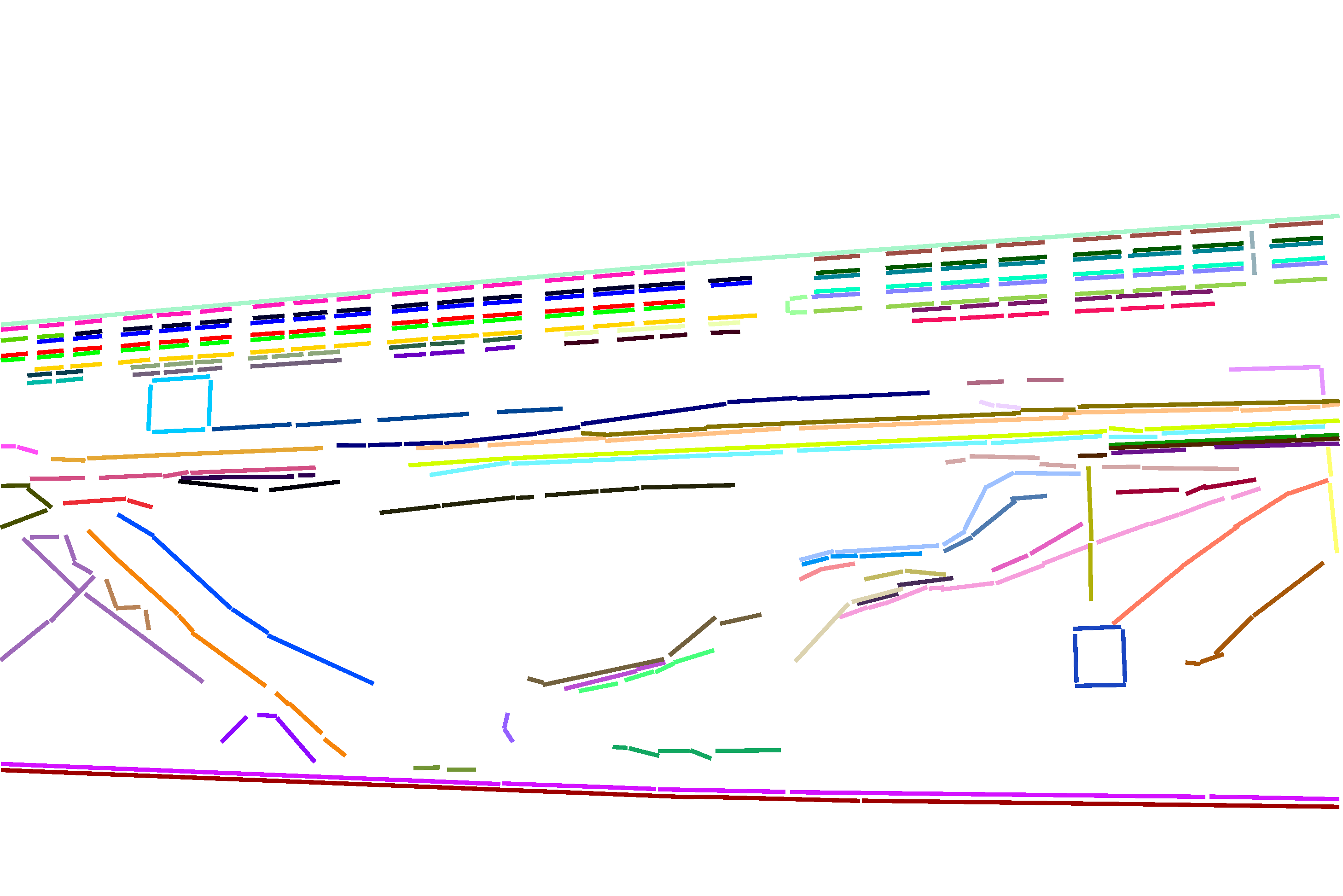}}
\fbox{\includegraphics[width=0.26\textwidth]{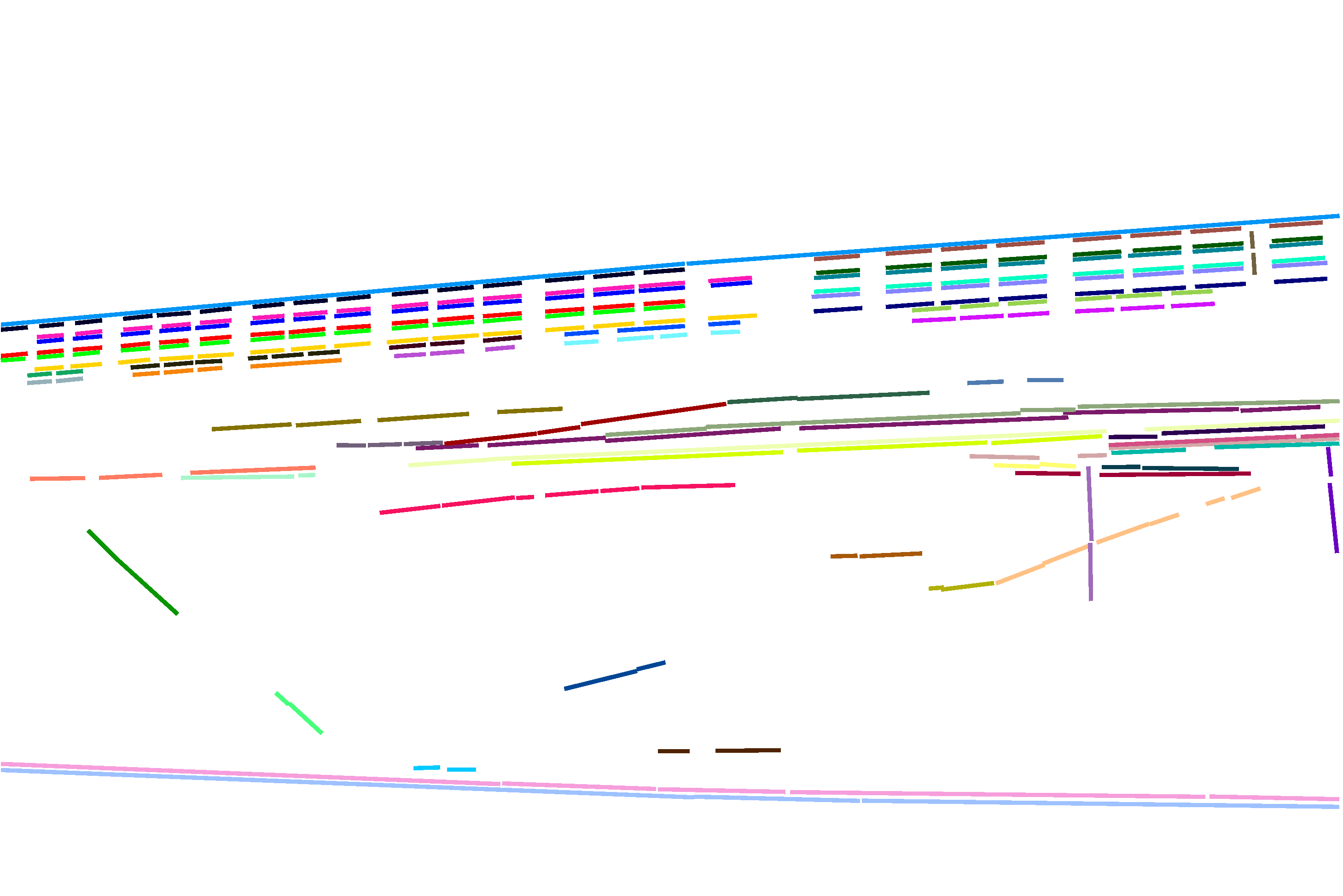}}~
\fbox{\includegraphics[width=0.26\textwidth]{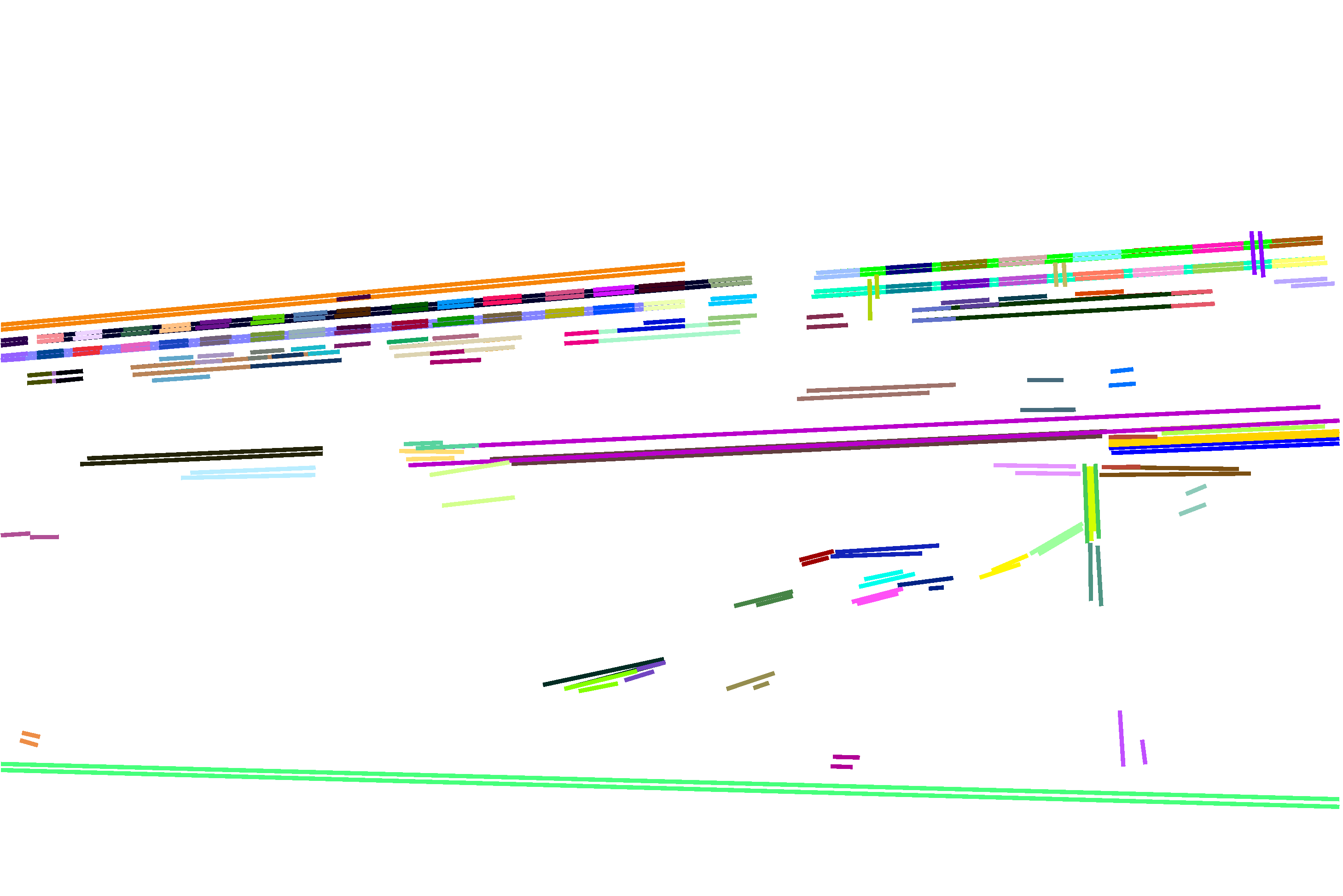}}~
\fbox{\includegraphics[width=0.26\textwidth]{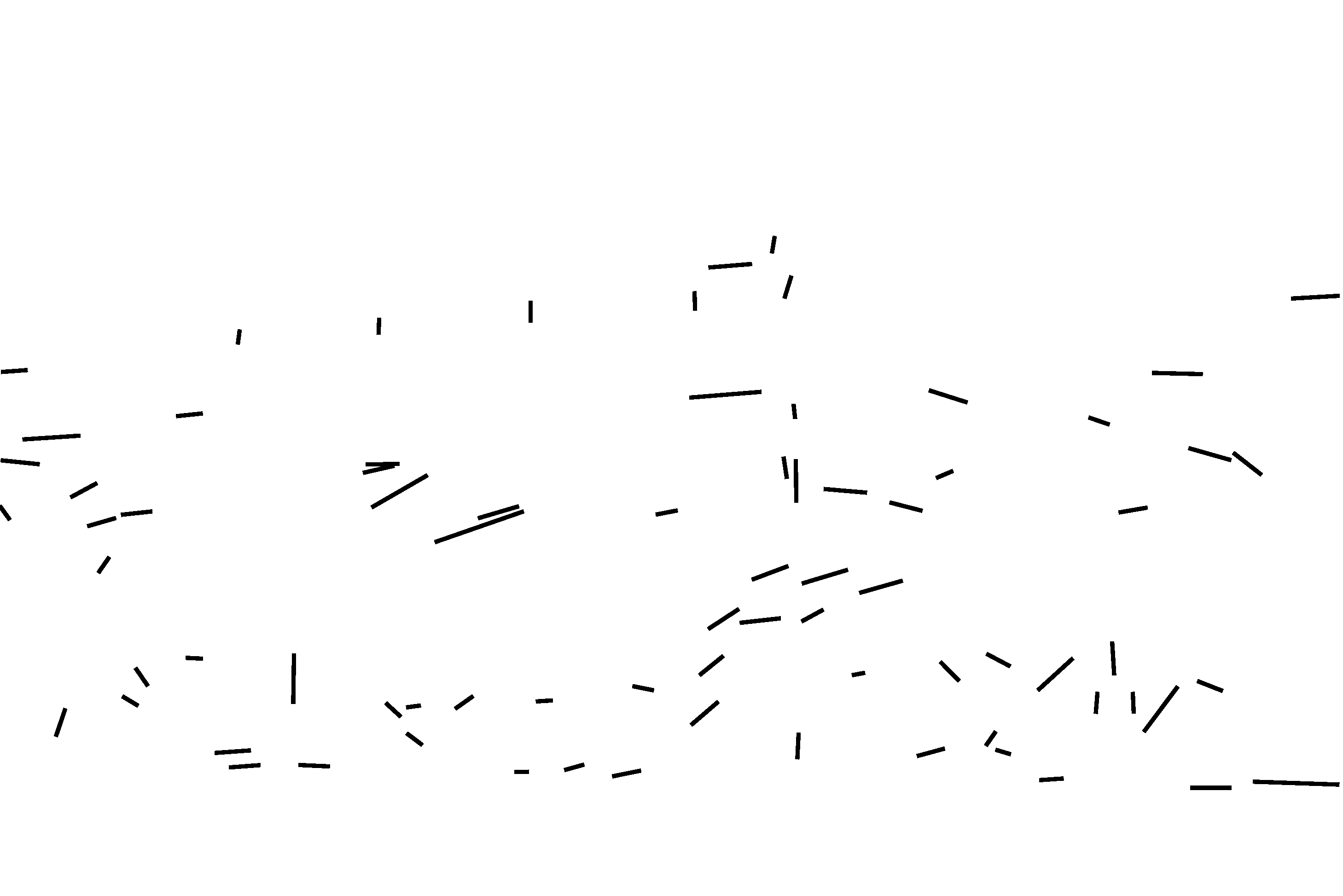}}
\caption{Gestalt detector performance over Whale image -- each color indicates
  one detected structure. Left to right and top to bottom: initial image, LSD
  line segments, all good continuations, non-local alignments, pairs of
  parallel segments (bars), and finally all segments that do not belong to any
  of the former structures. A segment can belong simultaneously to several of
  these higher order \textit{partial Gestalts}. Zoom-in by a factor of 400$\%$ is
  recommended.}
\label{fig:Detailed}
\vspace{-0.50cm}
\end{figure*}

\section{Conclusion}\label{sec: conclusion}

The three detectors proposed in this paper represent one step up in the Gestalt
grouping pyramid. The good experimental point is that few line segments are left
out unorganized, a requirement that was called ``articulation without rest" in
the Gestalt literature \cite{kanizsa1979organization}.  Clearly this step up
must be completed by further bottom up grouping. For example, good continuation
curves present gaps that must be completed with irregular contours. Other gaps
in good continuations or alignments must be explained by T-junctions; bars and
non-local alignments may be grouped again with the same good continuation and
parallelism principles.  These further steps are required to solve the
figure-background problem by unsupervised algorithms.
 
\section{Acknowledgment}
Work partly founded by the European Research Council (advanced grant Twelve Labours $n^\circ246961$).

\bibliography{refs}

\begin{figure} 
\centering
\fbox{\includegraphics[width=.3\columnwidth]{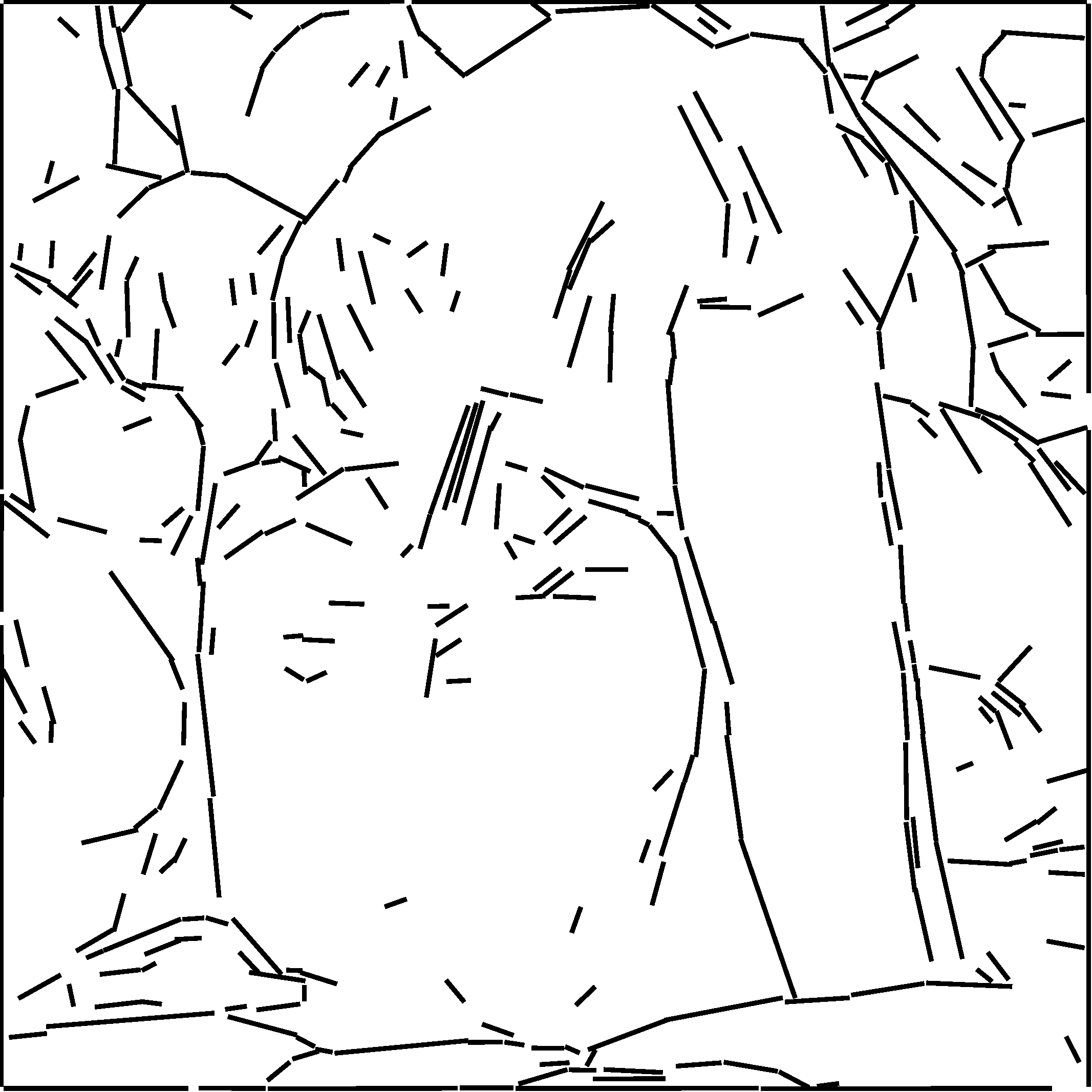}}~
\fbox{\includegraphics[width=.3\columnwidth]{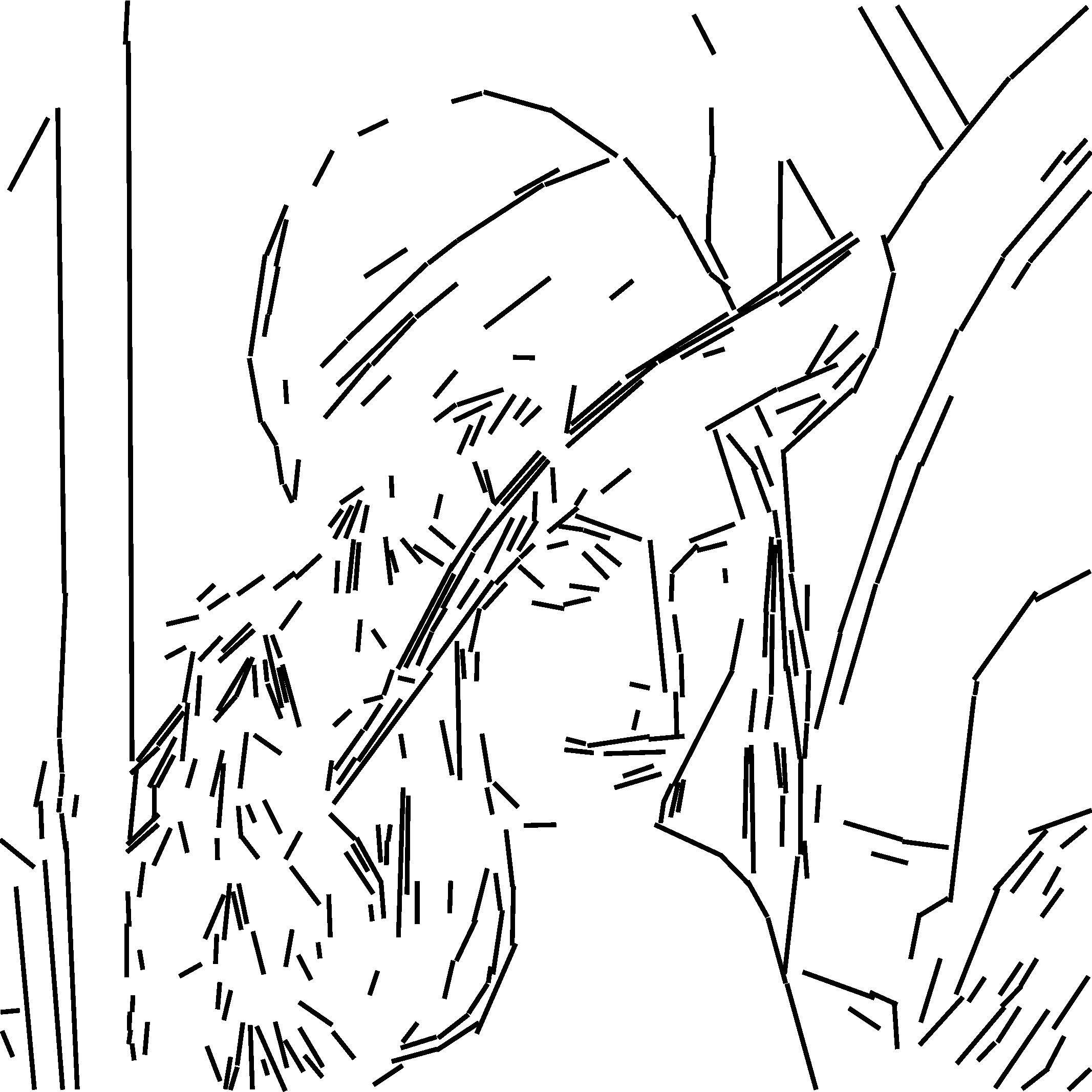}}~
\fbox{\includegraphics[width=.3\columnwidth , height=.3\columnwidth]{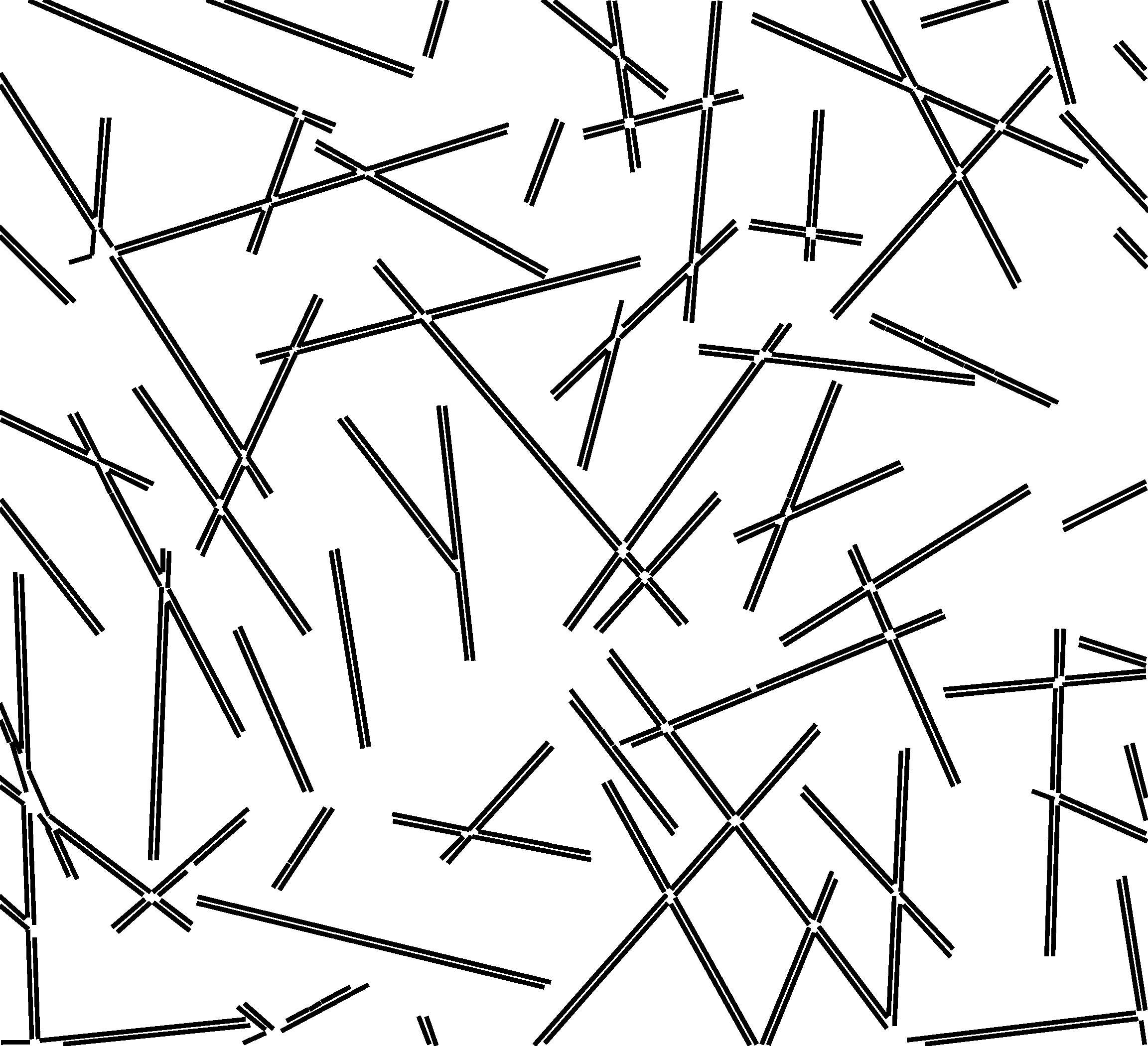}}
\fbox{\includegraphics[width=.3\columnwidth]{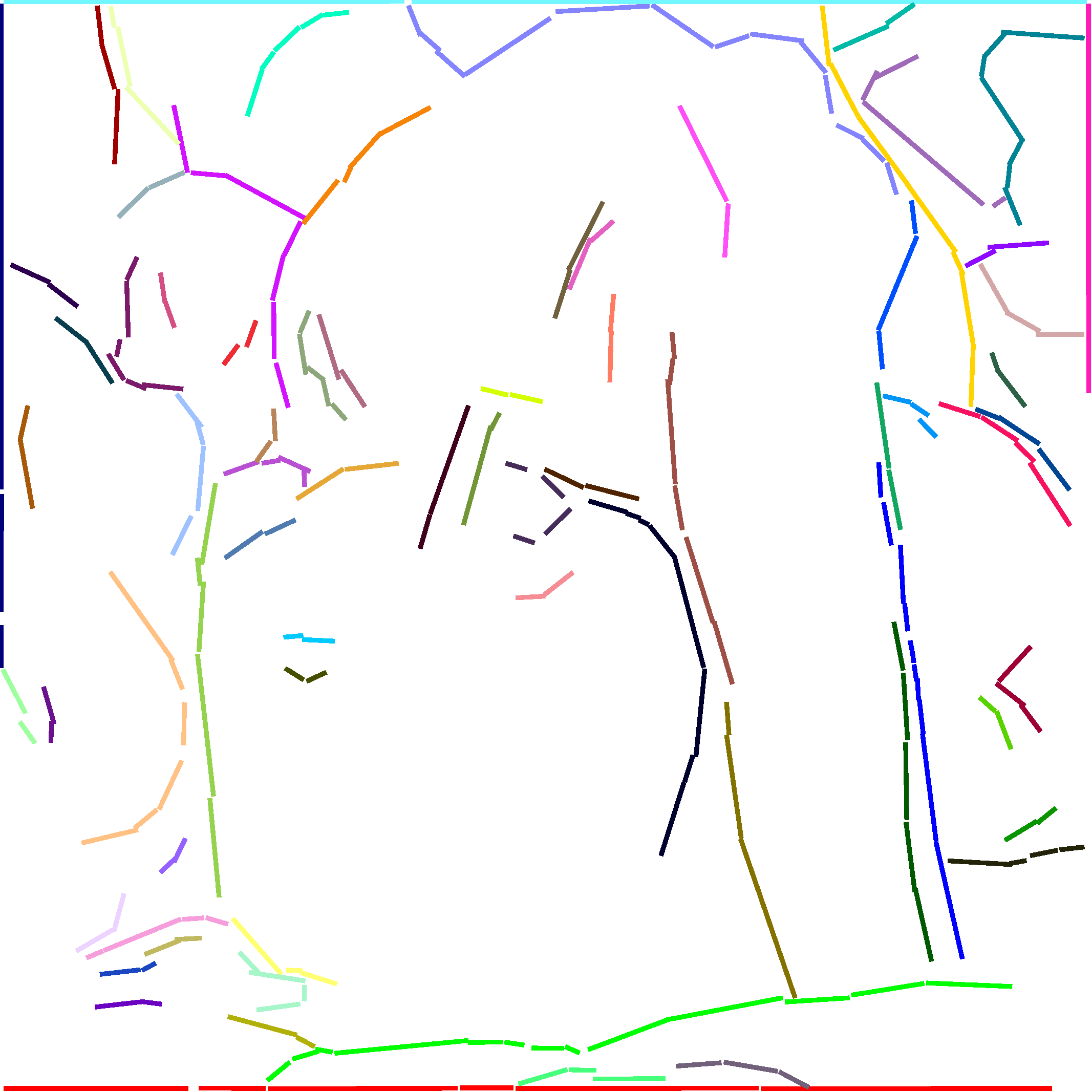}}~
\fbox{\includegraphics[width=.3\columnwidth]{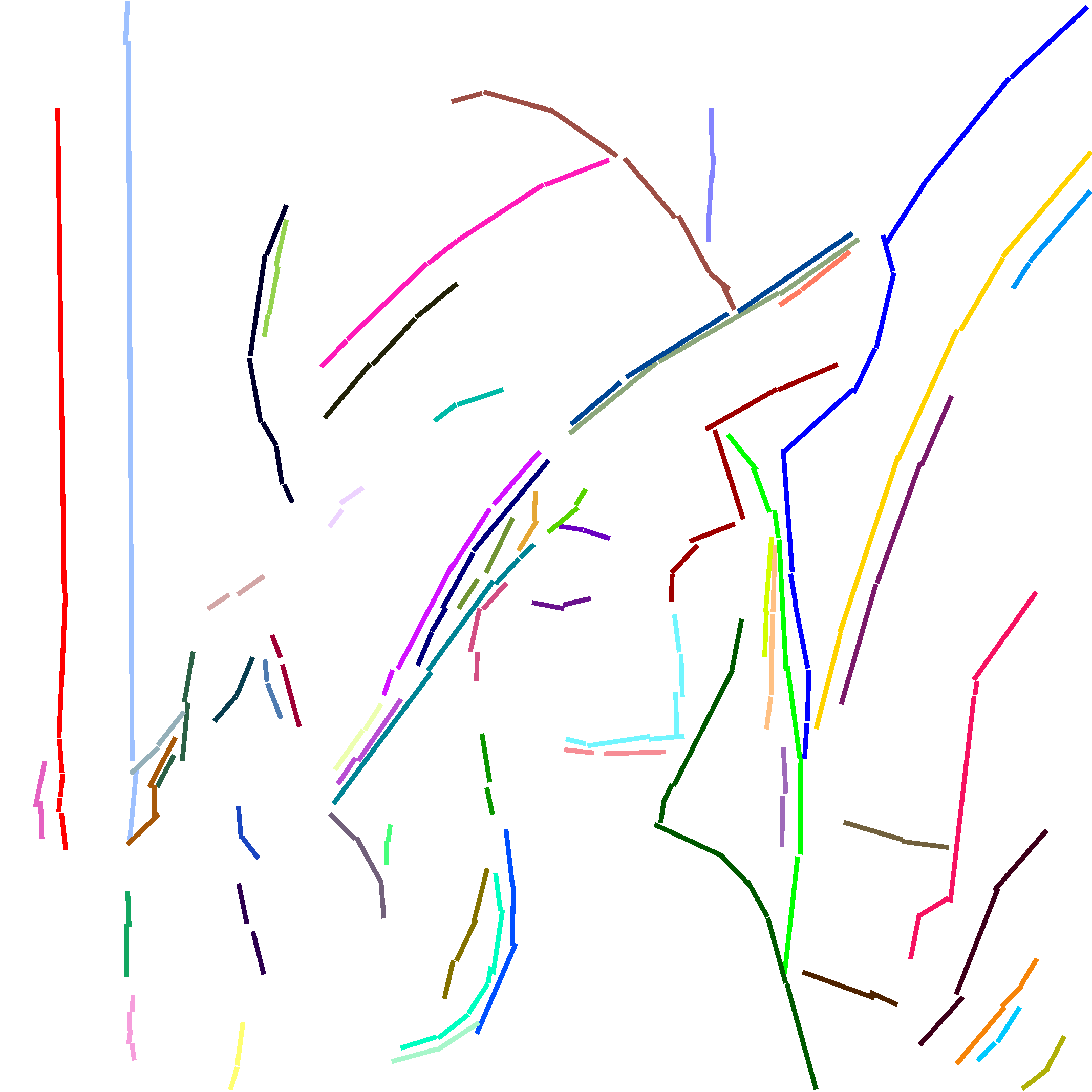}}~
\fbox{\includegraphics[width=.3\columnwidth , height=.3\columnwidth]{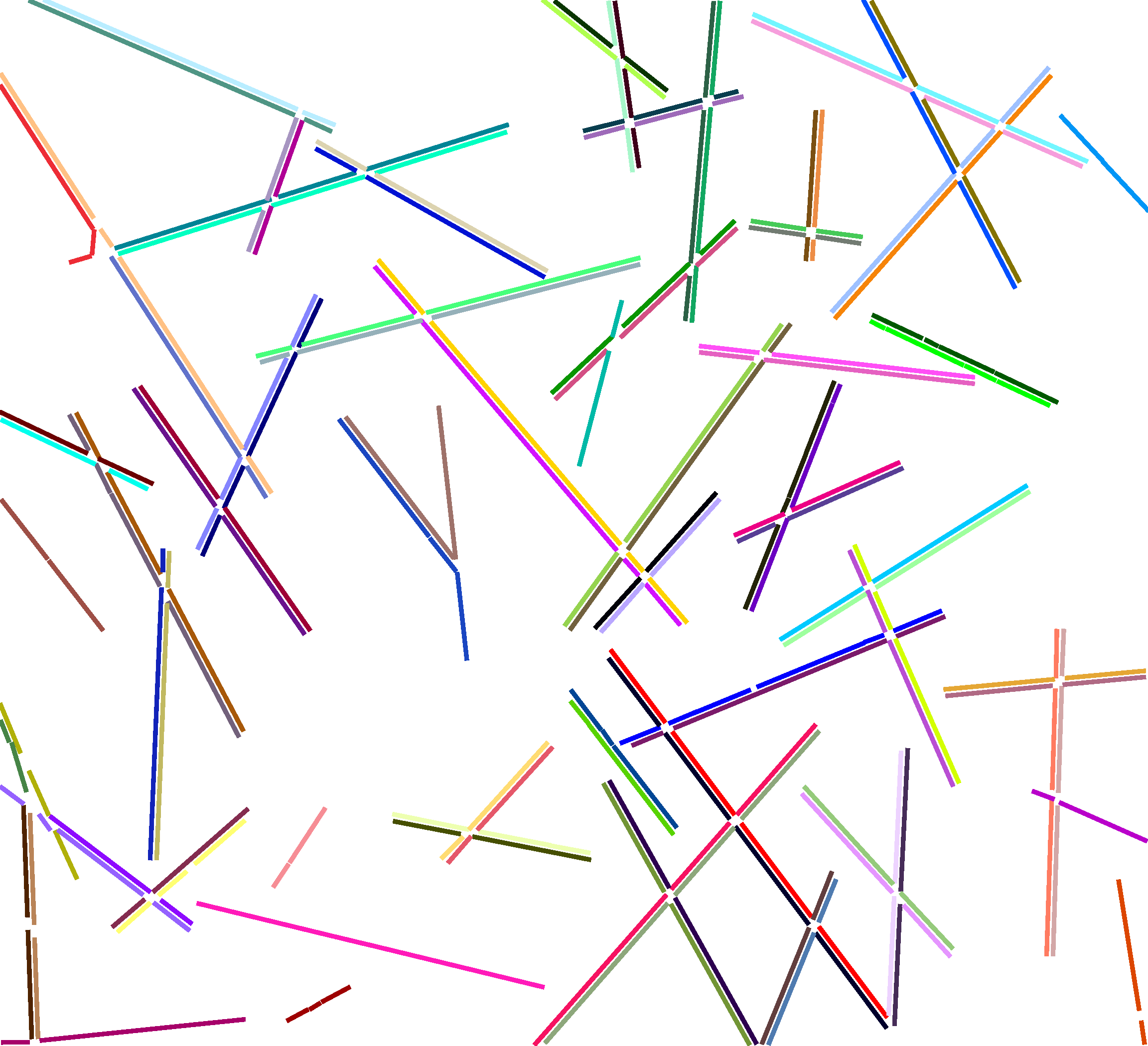}}
\caption{Good continuation grouping -- each color indicates one structure. First
  and second lines show LSD segments and good continuation segments,
  respectively, for (from left to right) Peppers, Lena and Random-Lines. In the
  Random-Line image as it is expected, no continuation is detected except over
  fragments of the same line. Zoom-in by a factor of 400$\%$ is recommended.}
\label{fig:Several}
\vspace{-0.50cm}
\end{figure}

\end{document}